\newtheorem{theorem}{Theorem}
\newtheorem*{problem}{Problem}
\newtheorem{definition}{Definition}
\def\C{{\cal C}} %
\def\D{{\cal D}} %
\def\H{{\cal H}} %
\def\K{{\cal K}} %
\def\L{{\cal L}} %
\def\S{{\cal S}} %
\def\U{{\cal U}} %
\def\X{{\cal X}} %
\newcommand{\nnnum}[1]{\relax\ifmmode 
	{\mathbb #1}_{\geq 0} \else ${\mathbb #1}_{\geq 0}$
	\fi}
\newcommand{\npnum}[1]{\relax\ifmmode 
	{\mathbb #1}_{\leq 0} \else ${\mathbb #1}_{\leq 0}$
	\fi}
\newcommand{\pnum}[1]{\relax\ifmmode 
	{\mathbb #1}_{> 0} \else ${\mathbb #1}_{> 0}$
	\fi}
\newcommand{\nnum}[1]{\relax\ifmmode 
	{\mathbb #1}_{< 0} \else ${\mathbb #1}_{< 0}$
	\fi}
\newcommand{\plnum}[1]{\relax\ifmmode 
	{\mathbb #1}_{+} \else ${\mathbb #1}_{+}$
	\fi}
\newcommand{\nenum}[1]{\relax\ifmmode 
	{\mathbb #1}_{-} \else ${\mathbb #1}_{-}$
	\fi}
\newcommand{\reals}{{\mathbb{R}}}                    %
\newcommand\reallywidehat[1]{%
\savestack{\tmpbox}{\stretchto{%
  \scaleto{%
    \scalerel*[\widthof{\ensuremath{#1}}]{\kern-.6pt\bigwedge\kern-.6pt}%
    {\rule[-\textheight/2]{1ex}{\textheight}}%
  }{\textheight}%
}{0.5ex}}%
\stackon[1pt]{#1}{\tmpbox}%
}
\newcommand{\RN}[1]{%
  \textup{\uppercase\expandafter{\romannumeral#1}}%
}
\newcommand{\rp}{\reals^{\geq 0}}
\newcommand{\se}{s}
\newcommand{\cm}{c}
\newcommand{\gp}{g}
\newcommand{\co}{\pi}
\newcommand{\T}{\intercal}
\newcommand{\unif}[1]{\mathtt{Unif}\left(#1\right)}
\newcommand{\relu}{\mathtt{ReLU}}
\newcommand{\indicator}{\mathbb{I}}
\newcommand{\E}{\mathop{\mathbb{E}}}
\newcommand{\iv}{r}
\newcommand{\hr}[1]{\mathcal{E}\left(#1\right)}
\newcommand{\piv}{\se^+}
\newcommand{\diag}{\mathop{\mathrm{diag}}}
\newcommand{\ce}{\mathrm{center}}
\title{\LARGE \bf
Learning Certifiably Robust Controllers Using Fragile Perception}
\author{Dawei Sun$^{1}$, Negin Musavi$^{1}$, Geir Dullerud$^{1}$, Sanjay Shakkottai$^{2}$, and Sayan Mitra$^{1}$%
\thanks{$^{1}$Dawei Sun, Negin Musavi, Geir Dullerud, and Sayan Mitra are with Coordinated Science Laboratory,
        University of Illinois, Urbana, IL 61801, USA
        {\tt\small \{daweis2, nmusavi2, dullerud, mitras\}@illinois.edu}}%
\thanks{$^{2}$Sanjay Shakkottai is with the Department of Electrical and Computer Engineering, University of Texas at Austin,,
        Austin, TX 78712, USA
        {\tt\small sanjay.shakkottai@utexas.edu}}%
}
\begin{document}

\maketitle
\thispagestyle{empty}
\pagestyle{empty}

\begin{abstract}
Advances in computer vision and machine learning enable robots to perceive their surroundings in powerful new ways, but  these perception modules have well-known  fragilities. We consider the problem of synthesizing a safe controller that is robust despite perception errors. The proposed method constructs  a state estimator based on Gaussian processes with input-dependent noises. This estimator  computes a high-confidence set for the actual state given a perceived state. Then, a robust neural network controller is synthesized that can provably handle the state uncertainty. Furthermore, an adaptive sampling algorithm is proposed to jointly improve the estimator and controller. Simulation experiments, including a realistic vision-based lane keeping example in CARLA, illustrate the promise of the  proposed approach in synthesizing  robust controllers with deep-learning-based perception.
\end{abstract}

\section{Introduction}
\label{sec:intro}

Advances in computer vision and machine learning enable robots to perceive their surroundings in powerful new ways, but  these perception modules have well-known  fragilities.
The decision boundaries for classifiers are vulnerable to adversarial inputs~\cite{DBLP:journals/corr/SzegedyZSBEGF13}. 
ML models are often overconfident, and do not know what they do not know~\cite{gal2016dropout}.
Biases in training data can bleed into biased algorithms for pedestrian detection that do not work well for dark-skinned people~\cite{wilson2019predictive}.
Embodied in robots or autonomous vehicles, like driver-assistance systems, robotic tractors, and delivery drones, the perception module fragilities can trigger safety violations. 

We address the problem of designing robust controllers that tolerate and compensate for the fragilities of the perception modules they use. Simply put, {\em how to design reliable controllers that use unreliable perception?}  We study the fundamental control task for a robot to maintain a given {\em invariant\/}. For example, a drone has to stay within a geo-fenced area, or a car has to stay within the lanes. The system uses computer vision-based perception. There are several technical challenges in rigorously addressing this problem.
First, unlike the typical i.i.d. model of sensor noise in control theory textbooks, the output from the perception modules here may have wildly biased, state-dependent errors. Secondly, leaving aside the issue of perception errors, even the task of  designing an invariance-preserving controller for a nonlinear, (and possibly incompletely known) dynamical system is an actively researched topic~\cite{ames2019control,fan2020fast,fan2018controller,sun2020learning,Vasudevan-RSS-19,herbert2017fastrack}. Finally, certifying the correctness or safety of a given control system is also another challenge actively pursued by the formal verification and the control theory communities.

In this paper, the perception module is modeled as a function that takes in the actual state and generates a noisy copy of it, i.e., the perceived state. In order to design a feedback controller that only has access to the perceived state to maintain safety, we utilize Gaussian processes (GP) with input-dependent noises to construct a set-valued state estimator from a data set of input-output pairs of the perception module, such that given a perceived state it computes a high-confidence set for the actual state. We then design a learning algorithm for synthesizing a controller that takes in the high-confidence set from the state estimator and computes the control signal. A barrier function is jointly synthesized with the controller, which provides theoretical guarantee on the safety of the resulting closed-loop system. Inspired by~\cite{dean2020guaranteeing}, the original theory of control barrier functions is extended to handle uncertainty in the state. However, the synthesis could fail due to the large uncertainty of the GP-based state estimator. In this case, new samples from the perception module will be collected to improve the state estimator until a satisfying controller and certificate are found. An adaptive sampling algorithm is proposed to reduce the number of samples required in this process.

We evaluated the proposed approach on three benchmark systems including two simpler ones with synthetic perception errors and one more realistic one in the CARLA simulator with a deep-learning-based perception module. Experimental results clearly verified that with the proposed approach, one can synthesize controllers that are robust to perception errors, and the proposed adaptive sampling algorithm indeed improves the sample efficiency.

Our main contributions are threefold.
(1) We formulate the problem of state estimation with uncertainty bounds as a GP regression problem with input-dependent noises, and show that indeed, this formulation can be effective for realistic deep-learning-based perception modules. 
(2) We design a learning-based algorithm that models the controller and the certificate as neural networks, and show how state estimation uncertainty bounds can be used for synthesizing controllers that are certifiably robust to perception fragilities. (3) We propose an adaptive sampling approach to reduce the number of samples required to learn such a certifiable controller.

\section{Related work}
\label{sec:related}
\noindent\textbf{Perception-based control.} As new types of sensors emerge, the problem of integrating these sensors and the corresponding perception module into the control pipeline has attracted interest.
In recent works~\cite{lin2018autonomous,loianno2016estimation, tang2018aggressive}, the perception-based controller has been studied to enable aggressive control for drones. Further, data-driven approaches have been developed by the machine learning community. For example, imitation learning~\cite{codevilla2018end} and reinforcement learning~\cite{Sadeghi-RSS-17} have been used to learn vision-based control policies. However, such purely data-driven approaches do not provide any safety guarantees. In a series of recent works by Dean et al., the authors studied the robustness guarantees of perception-based control algorithms. In~\cite{dean2020robust}, the authors proposed a perception-based controller synthesis approach for linear systems and provided a theoretical analysis.
In~\cite{dean2020guaranteeing}, the authors proposed robust barrier functions which provide an approach for synthesizing safety-critical controllers under uncertainty of the state. In our approach, such a robust barrier function is used as a component of the estimation and control pipeline.

\noindent\textbf{Analysis of systems with ML-based modules.} As machine learning becomes the dominant approach in autonomy, analysis of such ML-based systems attracts more and more interest.
VerifAI~\cite{dreossi2019verifai} provides a complete framework for analyzing autonomous systems with ML modules in the loop.
In~\cite{hsieh2021verifying}, the authors study the safe abstraction of systems with ML-based perception modules. In~\cite{everett2021efficient}, the reachability of a closed-loop system with a neural controller has been studied. In~\cite{sun2022neureach}, a data-driven reachability analysis tool is developed, which works for ML-based systems.

\noindent\textbf{Certifiable controller synthesis.}
Recently, the idea of jointly synthesizing certificates and controllers with machine learning has been popular. Unlike purely data-driven approaches, such approaches can draw guarantees on the performance of the synthesized controller. For example, control Lyapunov functions~\cite{chang2019neural}, contraction metrics~\cite{sun2020learning}, and barrier functions~\cite{qin2021learning} have all been studied under this setting.

\section{Preliminaries and problem setup}
\label{sec:prelim}
\begin{figure*}
    \centering
    \includegraphics[width=\linewidth]{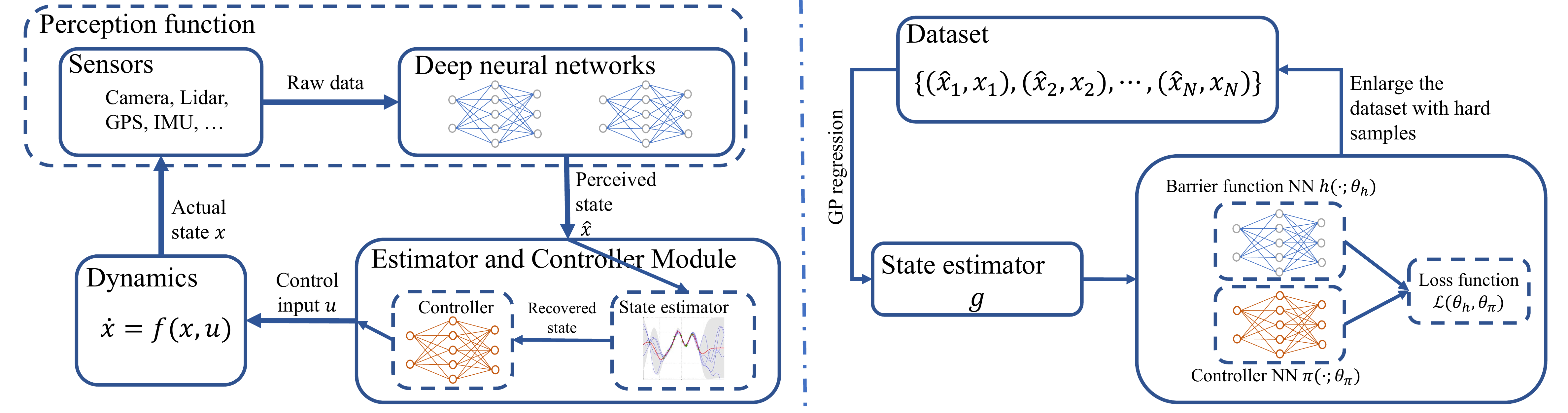}
    \caption{Overview of the system and the learning algorithm.}
    \label{fig:overall_sys}
\end{figure*}
We denote by $\reals$ and $\rp$ the set of real and non-negative real numbers respectively. Unless otherwise stated, $x^{(i)}$ denotes the $i$-th element of vector $x$. Similarly, for a vector-valued function $f : \reals^n \mapsto \reals^m$, $f^{(i)} : \reals^n \mapsto \reals$ is its $i$-th component.
For a vector $x \in \reals^n$, $\diag(x)$ is the $n \times n$ diagonal matrix generated from $x$. Also, let $\exp(x) = [\exp(x^{(i)}), \cdots, \exp(x^{(n)})]$.
For a point $c \in \reals^n$ and a positive definite matrix $Q \in \reals^{n \times n}$, denote $\hr{c, Q} := \{x \in \reals^n : x^\T Q x \leq 1\}$ to be the ellipsoid at $c$ with $Q$ defining its shape. Further, let $\ce(A)$ be the center of an ellipsoid $A$.
For a set $\S$, we denote $\unif{\S}$ the uniform distribution over $\S$, $2^\S$ the power set of $\S$, and $\indicator_\S$ the indicator function generated from this set, which is a binary-valued function such that $\indicator_\S(s) = 1$ iff. $s \in \S$.
Let $\relu$ be the Rectified Linear Unit (ReLU) function. That is, $\relu(x) = x$ if $x>0$, otherwise, $\relu(x) = 0$.
An extended class $\K_\infty$ function is a continuous function $\alpha: \reals \mapsto \reals$ that is strictly increasing with $\alpha(0) = 0$ and $\lim_{r \rightarrow \infty}\alpha(r) = \infty$.

\subsection{Dynamical systems}
We consider dynamical systems of the form
\begin{equation}
    \dot{x} = f(x, u),
    \label{eq:sys}
\end{equation}
where $x \in \X \subseteq \reals^n$ is the state and $u \in \mathcal{U} \subseteq \reals^m$ is the control input. Here, $\X$ and $\mathcal{U}$ are the state space and the control input space, which are compact sets. We assume that $f: \reals^n \times \reals^m \mapsto \reals^n$ is a smooth function.
A feedback controller is a function $\co: \X \mapsto \U$, which can be plugged into the dynamical system in Eq.~\eqref{eq:sys} and results in an autonomous system $\dot{x} = f(x, \co(x))$.
A function $\xi : \X \times \rp \mapsto \X$ is called the \textit{solution function} of the autonomous system if for all initial states $x_0 \in \X$, $\xi(x_0, \cdot)$ satisfies Eq.~\eqref{eq:sys}. Given an initial state $x_0$, the function of time $\xi(x_0, \cdot)$ is called a trajectory of the system.

In this paper, we only assume input-output access to the function $f$, i.e., $f(x, u)$ can be computed given the state $x$ and control input $u$. This enables the application in practical situations where a closed-form description of $f$ is unavailable or unwieldy. For example, $f$ could  be a deep neural network model of a complicated robot.

\subsection{Imperfect perception functions}
We study autonomous systems equipped with sensors and perception modules, which together constitute a \textit{perception function} $\se : \X \mapsto \X$.
That is, it takes in the actual state $x \in \X$ and produces a noisy observation $\hat{x} := \se(x)$ which is called the \textit{perceived state}.
The perception function $\se$ is a complex composite of the environment, the sensor, and the perception module, for instance, a deep convolutional neural network, and we treat it as a black box function. We need not have the closed-form expression of the perception function, instead we only need input-output access to it, i.e., given a state $x$, it is possible to run the perception function and compute the perceived state $\hat{x}$. However, this evaluation process is expensive and should be called as few times as possible. In this paper, we \textit{do not} assume the smoothness or invertibility of function $\se$. The perception function $\se$ could map two states to the same perceived state, in which case, recovering the exact actual state from the perceived state becomes impossible, and the inverse of the perception function $\se$ should be a set-valued function.

\subsection{The synthesis problem}
We first introduce the notion of invariant sets.

\begin{definition}[Invariant sets]
\label{def:inv}
Consider an autonomous system $\dot{x} = f(x)$, where $x \in \X$. 
A set $\C \subseteq \X$ is called an invariant set of the autonomous system, if starting from any initial state in $\C$, a trajectory always stays in $\C$, i.e.,
$x_0 \in \C \implies \forall t \in \rp, \xi(x_0, t) \in \C$.
\end{definition}
The notion of invariant sets is widely used in safety-critical synthesis. For example, if one can ensure that the user-defined safe set is invariant to the system, then starting from a safe state, the system will always maintain safety.

The synthesis problem considered in this paper is as shown in Figure~\ref{fig:overall_sys}. The goal is to synthesize a module that computes control input $u$ for the dynamical system such that after plugging this module into the dynamical system, a user-defined set $\S \subset \X$ is invariant to the closed-loop system. However, different from an ordinary feedback controller, this module does not have access to the actual state $x$ of the system. Instead, it only has access to a noisy version of the actual state, which is the perceived state $\hat{x}$. Formally, we have the following problem definition.

\begin{problem}[Synthesis problem]
Given a dynamical system as in Eq.~\eqref{eq:sys}, a perception function $\se: \X \mapsto \X$, and a target invariant set $\S \subset \X$, the synthesis problem is to find a function $\cm: \X \mapsto \U$, such that $\S$ is indeed invariant with respect to the closed-loop system, i.e.,
$$\dot{x} = f(x, \cm(s(x))).$$
\end{problem}

Our proposed approach to solving the above problem uses Gaussian processes and certificate-based robust controller synthesis. A GP-based state estimator is constructed such that given a perceived state $\hat{x}$ it computes a high-confidence set that contains the actual state $x$ with a high probability.
Then, based on the GP-based state estimator, the approach jointly searches for a certificate (a control barrier function) and a controller. Input-output pairs of the perception function are adaptively sampled to improve the GP-based state estimator until a certificate and a certified controller are found. With the certificate, we can theoretically show the safety of the closed-loop system and its robustness to perception error. The approach could fail to find such a controller in cases where the perception function maps multiple vastly different states to the same perceived state and the controller cannot obtain enough information from the perceived state. In this case, the algorithm outputs a list of perceived states that prevents a successful controller synthesis. This information can be useful for the user to improve the perception function.

\section{Overview of Design Methodology}
\label{sec:approach}
Our approach decomposes the desired controller function $\cm$ into two components, a GP-based \textit{state estimator} $\gp: \X \mapsto 2^\X$ aiming to recover a high-confidence set for the actual state from the perceived state, and a robust controller $\co:\X \mapsto \U$ (See Fig.~\ref{fig:overall_sys}). As will be shown later, the output of $\gp$ is an ellipsoid. We call the sequential combination of the two components as the {\em Estimator and Controller Module (ECM)\/}: $\cm(\cdot) := \co(\ce(\gp(\cdot)))$.

This section is organized as follows. In Section~\ref{sec:gp}, we will show a GP-based algorithm for constructing the state estimator $\gp$.
In Section~\ref{sec:cbf}, we will show a learning-based algorithm that learns a robust controller $\co$ given the state estimator $\gp$. In Section~\ref{sec:rtas}, we will combine these two components and show an algorithm that jointly improves the state estimator and the controller with adaptive sampling.

\subsection{Constructing the state estimator}
\label{sec:gp}
The ECM consists of a state estimator $\gp$ that estimates the ground-truth state $x$ from the perceived state $\hat{x} = \se(x)$ (Recall Fig.~\ref{fig:overall_sys}).
Since $\se$ might not be exactly invertible, there might be a set of possible states that correspond to a perceived state $\hat{x}$. This suggests designing the state estimator $\gp$ as a set-valued function. Further, the state estimator $\gp$ is constructed from a finite set of samples, we can only expect the estimate $g(\hat{x})$ will contain the actual state with a high probability. Thus, the state estimator is designed as $\gp: \X \mapsto 2^\X$ such that for an arbitrary state $x$, $x \in \gp(\se(x))$ with a high probability.
In this paper, we construct such an estimator using Gaussian processes.
Gaussian processes (GP) are widely used non-parametric models for regression. Here, we view the problem of estimating $x$ from $\hat{x}$ as a regression problem. One of the key features of GP regression is that given a query $\hat{x}$, it does not compute a single-point estimate of $x$ but a posterior distribution of $x$, from which a high-confidence set containing $x$ can be extracted.

\vspace{1mm}
\noindent\textbf{Construction of the data set.}
The state estimator will be constructed from samples of the perception function $\se$. To this end, a data set $\D = \{(\hat{x}_j, x_j)\}_{j=1}^{N}$ that captures the relationship between the perceived state and the actual state is constructed. Each sample is obtained as follows: first, a state $x_j \in \X$ is sampled, then $\hat{x}_j$ is computed as $\hat{x}_j = \se(x_j)$.

\vspace{1mm}
\noindent\textbf{Setup of the probabilistic model.}
In order to apply GP regression, we first set up a probabilistic model to represent the observations in the above data set. As stated earlier, since $\se$ might not be invertible, there might be multiple $x$'s that correspond to the same $\hat{x}$ in the above data set. To characterize this property of the data set, we assume there is an input-dependent observation noise. Specifically, we adopt the following probabilistic model (Along the lines of~\cite{kersting2007most}).
$$x_j = \piv(\hat{x}_j) + w_j,\, j = 1, \cdots, N,$$
where $\piv : \X \mapsto \X$ approximately inverts $\se$ with an input-dependent zero-mean noise $w_j \sim \mathcal{N}\left(0, \diag(\exp(z(\hat{x}_j)))^2\right)$. Here, $z : \X \mapsto \reals^n$ is the noise-level function, which characterizes the non-invertibility of the perception function $\se$ at $\hat{x}$.
The goal is to learn the function $z$ and the ``inverse" function $\piv$ from data. Notice that the perceived state $\hat{x}$ is a noisy copy of the actual state $x$. Given a perceived state $\hat{x}$, the actual state $x$ should be close to $\hat{x}$. Thus, instead of estimating $x$ directly, it should be easier to estimate an error $e$ such that $x = \hat{x} + e$. Here, the error $e$ is called the \textit{perception error}. To this end, we decompose $\piv$ as the summation of an identity mapping and a function $\iv : \X \mapsto \X$ as follows.
$$\piv(a) = a + \iv(a),\,\forall a \in \X.$$
Now, we can rewrite our probabilistic model as the relationship between the perception error and the perceived state,
$$e_j := x_j - \hat{x}_j = r(\hat{x}_j) + w_j,\, j = 1, \cdots, N,$$

Next, we will use GP regression to identify the functions $\iv$ and $z$. Instead of identifying the vector-valued functions directly, we assume the components of these functions are independent and thus apply GP regression to identify each component individually. Here, we only show the process for the $i$-th components, i.e., $\iv^{(i)}$ and $z^{(i)}$. As in common settings in GP literature, for example \cite[Chapter~2.2]{rasmussen2003gaussian}, we first put priors on functions $\iv^{(i)}$ and $z^{(i)}$. Specifically, we assume that $\iv^{(i)}$ is a sample from a zero-mean Gaussian process generated by a kernel $k_{\iv^{(i)}}: \X \times \X \mapsto \rp$ or formally $\iv^{(i)} \sim \mathcal{GP}\left(0, k_{\iv^{(i)}}(\cdot, \cdot)\right)$. That is, we assume that the function $\iv^{(i)}$ is realizable by a Gaussian process. As for the unknown function $z^{(i)}$, we follow the approach in~\cite{kersting2007most} and assume that $z^{(i)}$ is also a sample from a Gaussian process, $z^{(i)} \sim \mathcal{GP}\left(0, k_{z^{(i)}}(\cdot, \cdot)\right)$, generated by a kernel $k_{z^{(i)}}$.

\vspace{1mm}
\noindent\textbf{Computation of the posterior distribution.}
The prior distributions of $\iv^{(i)}$ and $z^{(i)}$ are the best estimate we can make before we sample from the underlying function. However, with the data, this estimate can be improved by computing the \textit{posterior distribution}. Specifically, the goal is, given $\D$ and a query $\hat{x}_*$, to compute the posterior distribution of $e_*^{(i)}$, i.e., $e_*^{(i)} | \hat{x}_*, \D$. We adopt the approach in~\cite{kersting2007most}, which proceeds as follows. Given data set $\D$, we first estimate a standard, homoscedastic\footnote{That is, the noise term is independent of the input.} GP. Then, this GP is evaluated on $\D$, and the empirical noises are recorded and form a new data set to compute the posterior of the noise-level function $z^{(i)}$. For more details of the computation, please refer to~\cite[Section 4]{kersting2007most}. Let the posterior of $z^{(i)}$ be $\hat{z}^{(i)}$.

Then, it is a standard result (e.g., please see \cite{goldberg1997regression}) that given the data set $\D$ and a query $\hat{x}_*$, the posterior distribution of $e_*^{(i)}$ is still Gaussian, $e_*^{(i)} | \hat{x}_*, \D \sim \mathcal{N}\left(\mu^{(i)}(\hat{x}_*), (\sigma^{(i)}(\hat{x}_*))^2\right)$, where the mean function $\mu^{(i)}$ and standard deviation function $\sigma^{(i)}$ are defined as follows.
\[
\mu^{(i)}(\hat{x}_*) = K(\hat{x}_*, \hat{X}) (K(\hat{X}, \hat{X}) + K_N)^{-1} E,
\vspace{-0.3cm}
\]
\begin{multline*}
\sigma^{(i)}(\hat{x}_*) = k_{\iv^{(i)}}(\hat{x}_*, \hat{x}_*) + \exp(\hat{z}^{(i)}(\hat{x}_*)) \\- K(\hat{x}_*, \hat{X}) (K(\hat{X}, \hat{X}) + K_N)^{-1} K(\hat{x}_*, \hat{X}),
\end{multline*}
where $\hat{X} = [\hat{x}_1, \cdots, \hat{x}_N]^\T$, $E = [e^{(i)}_1, \cdots, e^{(i)}_N]^\T$, $K_N = \diag([\hat{z}^{(i)}(\hat{x}_1), \cdots, \hat{z}^{(i)}(\hat{x}_N)])$. The matrices are defined as follows.
\[
K(\hat{x}_*, \hat{X}) \in \reals^{1 \times N}, ~ K(\hat{x}, \hat{X})_{j} = k_{\iv^{(i)}}(\hat{x}_*, \hat{x}_j),
\]
\[
K(\hat{X}, \hat{X}) \in \reals^{N \times N}, ~ K(\hat{X}, \hat{X})_{jl} = k_{\iv^{(i)}}(\hat{x}_j, \hat{x}_l).
\]
It should be clear that the posterior distribution of the actual state $x_* = \hat{x}_* + e_*$ is also Gaussian, $x_* | \hat{x}_*, \D \sim \mathcal{N}\left(\hat{x}_* + \mu(\hat{x}_*), \diag(\sigma(\hat{x}_*))^2\right)$.

\vspace{1mm}
\noindent\textbf{Construction of the high-confidence set.}
Next, we construct a high-confidence set such that the actual state is contained in it, with high probability. Since the posterior distribution of the actual state $x$ is a multivariate Gaussian distribution, it is standard to construct an ellipsoid as the high-confidence set.
Specifically, given a confidence level $\delta$, we construct the high-confidence set $\gp(\hat{x})$ as an ellipsoid centered at the mean of the distribution, $\hat{x} + \mu(\hat{x})$, and whose semiaxes are proportional to the standard deviations $\{\sigma^{(i)}\}$ such that $\Pr\left(x \in \gp(\hat{x})\right) = \delta$. The computation of this ellipsoid follows from standard results in statistics.

\label{sec:gp}
\subsection{Learning the controller}
\label{sec:cbf}
In this section, we will elaborate on the process of learning a certified controller $\co$ given such a state estimator $\gp$.
Throughout this section, we fix the \textit{target} invariant set $\S \subset \X$ for the control system. In order to prove or certify that $\S$ is indeed invariant with respect to the closed-loop system, we will learn a continuously differentiable function $h: \X \mapsto \reals$, the {\em certificate}, such that 
the $0$-superlevel set of $h$, 
$\C_h := \{x \in \X : h(x) > 0\}$ is equal to $\S$. Functions like $h$ are often called barrier functions or barrier certificates.

Barrier certificates were introduced  by Prajna and Jadbababie in~\cite{prajna2004safety}  formalizing the folk theorem that at the boundary of an invariant set, the vector field of an autonomous system must be pointing inwards. The approach was used  earlier to prove invariance of complex hybrid systems (see, for example~\cite{MWLF-hscc03}). A natural extension of this idea to controlled systems is presented in~\cite{ames2019control}, which we quote here:

\begin{theorem}
 If  $\co: \X \mapsto \U$ be a differentiable controller such that there  exists  an extended class $\K_\infty$ function  $\alpha$ such that $\forall x \in \X$,
\begin{equation}
    \frac{\partial h}{\partial x} \cdot f(x, \co(x)) + \alpha(h(x)) \geq 0,
\end{equation}
then, $\C_h$ is invariant to $\dot{x} = f(x, \co(x))$.
\label{thm:CBF}
\end{theorem}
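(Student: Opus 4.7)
The plan is to reduce the invariance claim to a scalar comparison argument on the value of $h$ along trajectories. Fix an initial state $x_0 \in \C_h$, so that $h(x_0) > 0$, and let $\xi(x_0, \cdot)$ denote the (locally existing) trajectory of the closed-loop system $\dot{x} = f(x, \co(x))$. Define the scalar-valued signal $\eta(t) := h(\xi(x_0, t))$. Because $h$ is continuously differentiable and $\co$ is differentiable, the chain rule yields
\[
\dot{\eta}(t) \;=\; \frac{\partial h}{\partial x}\bigl(\xi(x_0,t)\bigr) \cdot f\bigl(\xi(x_0,t),\, \co(\xi(x_0,t))\bigr),
\]
so the hypothesis of the theorem gives the differential inequality $\dot{\eta}(t) \geq -\alpha(\eta(t))$ pointwise in $t$, with initial condition $\eta(0) = h(x_0) > 0$.

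Next, I would introduce the auxiliary scalar ODE $\dot{y} = -\alpha(y)$ with $y(0) = \eta(0)$ and appeal to a standard comparison lemma (e.g., Khalil Lemma~3.4) to conclude $\eta(t) \geq y(t)$ wherever both are defined. It then suffices to show $y(t) > 0$ for all $t \geq 0$: this would give $h(\xi(x_0,t)) \geq y(t) > 0$, placing $\xi(x_0,t)$ in $\C_h$, which also precludes finite-time escape and thus extends the trajectory globally. For the positivity of $y$, I would use that $\alpha$ is extended class $\K_\infty$: $\alpha(\sigma) > 0$ whenever $\sigma > 0$, so $y$ is strictly decreasing on the positive ray, while $y \equiv 0$ is an equilibrium of the auxiliary ODE. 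Separation of variables gives the implicit relation $\int_{y(t)}^{y(0)} d\sigma/\alpha(\sigma) = t$, from which one reads off that $y(t)$ cannot cross zero in finite time provided $\int_0^{y(0)} d\sigma/\alpha(\sigma) = \infty$.

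The main obstacle I expect is precisely this last positivity step, because an extended class $\K_\infty$ function need not be Lipschitz at the origin, so the usual uniqueness argument (comparing with the zero solution) does not apply verbatim. If one adopts the convention that $\C_h$ is the \emph{closed} superlevel set $\{h \geq 0\}$, then Nagumo's theorem handles the argument cleanly with $y(t) \geq 0$ and no integral divergence is needed; since the paper writes $\C_h$ with a strict inequality, I would either strengthen $\alpha$ (e.g., locally Lipschitz near $0$, covering the common choice $\alpha(r) = \lambda r$) so that $y \equiv 0$ is the unique solution through the origin, or invoke the integral criterion above. Either way, once strict positivity of $y$ is in hand, the comparison inequality and the definition of $\C_h$ immediately yield the desired forward invariance $\xi(x_0, t) \in \C_h$ for all $t \geq 0$.
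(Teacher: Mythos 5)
The paper does not actually prove this theorem: it is quoted verbatim from the control-barrier-function literature (Ames et al., cited as \cite{ames2019control}), so there is no in-paper proof to compare against. Your comparison-lemma argument is the standard proof of that result and is essentially correct: reducing to the scalar differential inequality $\dot{\eta} \geq -\alpha(\eta)$ along trajectories and comparing with the minimal solution of $\dot{y} = -\alpha(y)$ is exactly how the cited work argues. You have also put your finger on the one genuine wrinkle: as stated here, $\C_h$ is the \emph{strict} superlevel set $\{h > 0\}$, and for a merely continuous extended class $\K_\infty$ function $\alpha$ the comparison solution can reach zero in finite time --- for instance $\alpha(r) = \sqrt{|r|}\,\mathrm{sign}(r)$ with $\dot{h} = -\sqrt{h}$ satisfies the hypothesis with equality yet reaches the boundary at $t = 2\sqrt{h(x_0)}$ --- so the open set need not be invariant even though the closed set $\{h \geq 0\}$ is. That is an imprecision inherited from the paper's restatement (Ames et al.\ define the invariant set as the closed superlevel set) rather than a flaw in your reasoning, and either of your proposed repairs (local Lipschitz continuity of $\alpha$ near the origin, or the Osgood-type divergence of $\int_0^{y(0)} d\sigma/\alpha(\sigma)$) closes the gap; note also that the paper's own instantiation $\alpha(r) = 0.1\,r$ satisfies both conditions, so nothing downstream is affected.
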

The function $h$ is called a \textit{control barrier function}.
All of the above, assume that the controller $\co$ has access to the perfect state. Going forward, we relax this assumption. Instead, we assume that $\co$ only has access to the output {\em set\/} from the estimator $\gp$, and the actual state can be anywhere in that set. 
Control barrier functions can be extended to handle uncertainty in states as in~\cite{dean2020guaranteeing}.
Specifically, we have the following theorem.
\begin{theorem}
Assume that a state estimator $\gp$ satisfies that for all $x$, the set $\gp(\se(x))$ contains $x$. If  $\co: \X \mapsto \U$ be a differentiable controller such that there exists an extended class $\K_\infty$ function $\alpha$ such that $\forall \hat{x} \in \X$,
\begin{equation}
    \inf_{x \in \gp(\hat{x})} \left( \frac{\partial h}{\partial x}(x) \cdot f(x, \cm(\hat{x})) + \alpha(h(x)) \right) \geq 0,
\end{equation}
where $\cm(\cdot) := \co(\ce(\gp(\cdot)))$, then, $\C_h$ is invariant to the closed-loop system, i.e., $\dot{x} = f(x, \cm \circ \se(x))$.

\label{thm:rCBF}
\end{theorem}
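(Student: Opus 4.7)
The plan is to reduce this to the standard barrier certificate argument (Theorem~\ref{thm:CBF}) by showing that the infimum condition over the uncertainty set $\gp(\hat{x})$ is strong enough to imply the pointwise barrier inequality along the true trajectory. Fix an arbitrary initial state $x_0 \in \C_h$ and let $\xi(x_0,\cdot)$ be the solution of the closed-loop system $\dot{x} = f(x, \cm \circ \se(x))$ with $\xi(x_0,0)=x_0$. Writing $x(t) := \xi(x_0,t)$ and $\hat{x}(t) := \se(x(t))$, the closed-loop dynamics becomes $\dot{x}(t) = f(x(t), \cm(\hat{x}(t)))$, so $h$ along the trajectory satisfies
\begin{equation*}
    \frac{d}{dt} h(x(t)) = \frac{\partial h}{\partial x}(x(t)) \cdot f(x(t), \cm(\hat{x}(t))).
\end{equation*}

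The decisive step is to observe that the control value $\cm(\hat{x}(t))$ depends only on $\hat{x}(t)$ and is therefore constant with respect to the inner variable of the infimum in the hypothesis. By the standing assumption on $\gp$, we have $x(t) \in \gp(\hat{x}(t))$ for every $t$. Applied with $\hat{x} = \hat{x}(t)$, the hypothesis then yields
\begin{equation*}
    \frac{\partial h}{\partial x}(x(t)) \cdot f(x(t), \cm(\hat{x}(t))) + \alpha(h(x(t))) \;\geq\; \inf_{x \in \gp(\hat{x}(t))} \left( \frac{\partial h}{\partial x}(x) \cdot f(x, \cm(\hat{x}(t))) + \alpha(h(x)) \right) \;\geq\; 0,
\end{equation*}
so the scalar function $\eta(t) := h(x(t))$ obeys the differential inequality $\dot{\eta}(t) \geq -\alpha(\eta(t))$ with $\eta(0) > 0$.

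The final step is to deduce $\eta(t) > 0$ for all $t \geq 0$, i.e., invariance of $\C_h$. I would invoke a standard comparison lemma for scalar ODEs applied to $\dot{y} = -\alpha(y)$ with $y(0)=\eta(0)>0$: since $\alpha$ is an extended class $\K_\infty$ function, $\alpha(0)=0$ and the origin is an equilibrium of the comparison system, so $y(t) > 0$ for all $t \geq 0$. The comparison lemma then gives $\eta(t) \geq y(t) > 0$, establishing $x(t) \in \C_h$ for all $t \geq 0$ and hence invariance. This reproduces, in essentially the same form, the reasoning behind Theorem~\ref{thm:CBF}, with the single new ingredient being that the hypothesis has been pre-robustified by the infimum over $\gp(\hat{x})$.

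The main obstacle I anticipate is a subtle one at the boundary: the set $\C_h$ is defined with a strict inequality, and one must rule out that $\eta(t)$ touches zero. This is handled cleanly by the class-$\K_\infty$ property of $\alpha$ and uniqueness of the comparison ODE at the equilibrium, but it does require $\alpha$ (and hence $h\circ x(\cdot)$) to have enough regularity for the comparison lemma to apply; since $h$ is continuously differentiable, $f$ is smooth, and $\cm\circ\se$ determines a well-defined trajectory, this regularity is in place. A secondary, more cosmetic obstacle is that $\se$ need not be continuous, so one should be careful to argue that the closed-loop trajectory $x(t)$ exists and that $t \mapsto h(x(t))$ is differentiable along it; the inequality on $\dot{\eta}$ is what is actually needed, and it follows pointwise from the chain rule applied to $h$ and to the state, independently of any regularity of $\se$.
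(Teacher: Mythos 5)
Your proposal is correct and follows essentially the same route as the paper: the key step in both is to use the assumption $x \in \gp(\se(x))$ to lower-bound the pointwise barrier quantity by the infimum in the hypothesis, thereby reducing to the standard control-barrier-function result (Theorem~\ref{thm:CBF}). The only difference is cosmetic---the paper invokes Theorem~\ref{thm:CBF} as a black box once the pointwise inequality is established for all $x\in\X$, whereas you inline the comparison-lemma argument along the trajectory.
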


\begin{proof}
Let us denote the resulting feedback controller by $\hat{\co}$, i.e., $\hat{\co} := \cm \circ \se$. Then, we show that $\hat{\co}$ indeed satisfies the conditions in Theorem~\ref{thm:CBF}. For an arbitrary $x \in \X$, we know that $x \subset g(\se(x)) = g(\hat{x})$, and thus by assumption, we have that
\begin{equation}
    \frac{\partial h}{\partial x}(x) \cdot f(x, \hat{\co}(x))) + \alpha(h(x)) \geq 0.
\end{equation}
Then, by Theorem~\ref{thm:CBF}, we have that $\C_h$ is invariant to the closed-loop system $\dot{x} = f(x, \hat{\co}(x))$.
\end{proof}

\noindent\textbf{Learning-based synthesis.}
Next, we present the algorithm for learning the robust barrier function and the controller given the Gaussian process developed in the last section. As shown in Figure~\ref{fig:overall_sys}, we model the controller and the barrier function with two neural networks $\co(\cdot; \theta_{\co})$ and $h(\cdot; \theta_h)$, where $\theta_{\co}$ and $\theta_h$ are the parameters. To simplify the notations, we may omit $\theta_{\co}$ and $\theta_h$. The learning algorithm aims at finding the correct parameters such that $\co$ and $h$ satisfy the condition in Theorem~\ref{thm:rCBF}. To impose this condition, we define a loss function $\L_1$.
\begin{multline*}
\L_1(\theta_{h}, \theta_{\co}) = \E_{\hat{x} \sim \unif{\X}}\E_{x \sim \unif{\gp(\hat{x})}} \Bigg[\\\relu\left(- \left(\frac{\partial h}{\partial x}(x) \cdot f(x, \co(\ce(\gp(\hat{x})))) + \alpha(h(x))\right)\right)\Bigg].
\end{multline*}
The above loss function penalizes the violations of the condition of Theorem~\ref{thm:rCBF} in a probabilistic sense.
Furthermore, the 0-superlevel set of the barrier function $h$ and the user-defined safe set $\S$ should coincide. To this end, we define the following loss function.
\begin{equation*}
\L_2(\theta_h) = \E_{x \sim \unif{\X}}\left[\left(1 - \indicator_\S(x)\right) h(x) - \indicator_\S(x) h(x)\right].
\end{equation*}

In order to train the neural networks on sampled data, we transform the above loss functions into their empirical version. That is, replacing the expectations with empirical averages. To this end, we construct a data set $\D_c$ as follows. We sample $M_1$ perceived state $\hat{x}$ from $\unif{\X}$ and denote them by $\{\hat{x}_i\}_{i=1}^{M_1}$. Then, for each $\hat{x}_i$, we sample $M_2$ points $x$ from $\unif{\gp(\hat{x})}$ and denote them by $\{x_i^{j}\}_{j=1}^{M_2}$. These samples constitute the data set $\D_c := \cup_{i=1}^{M_1}\{(\hat{x}_i, x_i^{j})\}_{j=1}^{M_2}$. Then, the empirical approximation of the above loss functions is defined as follows.
\begin{multline}
\L(\theta_h, \theta_{\co}) = \frac{1}{M_1 M_2}\sum_{i=1}^{M_1} \sum_{j=1}^{M_2} \Bigg[ \\  \lambda_1 \relu\left(- \left(\frac{\partial h}{\partial x}(x_i^j) \cdot f(x_i^j, \co(\ce(\gp(\hat{x}_i))))+ \alpha(h(x_i^j))\right)\right)\\
+\lambda_2\left(\left(1 - \indicator_\S(x_i^j)\right) h(x_i^j) - \indicator_\S(x_i^j) h(x_i^j)\right)\Bigg],
\label{eq:ERM}
\end{multline}
where $\lambda_1 > 0$ and $\lambda_2 > 0$ are weights that balance two loss terms. We train the neural networks by minimizing $\L$.

\label{sec:cbf}
\subsection{Adaptive sampling for estimation and control}
\label{sec:ada}

We minimize the loss function defined in Eq.~\eqref{eq:ERM} with stochastic gradient descent. After training, the loss function might remain positive due to the large uncertainty of the Gaussian process at certain points. These points are called \textit{hard samples}.
As stated above, in order to obtain robustness, we require the barrier function condition to hold for every point in a set defined by $\gp(\hat{x})$. For a hard sample $\hat{x}$, this set is too large and satisfying this condition might be impossible. To conquer this problem, we sample more data around these hard samples to reduce uncertainty. For each hard sample $\hat{x}$, ideally we should add the sample $(\hat{x}, x)$ to $\D$ in order to reduce the uncertainty of the GP at $\hat{x}$. However, since we do not have access to $x$, we resort to evaluating the perception function at $\ce(\gp(\hat{x}))$ instead of $x$, and thus the new sample is $(\se(\ce(\gp(\hat{x}))), \ce(\gp(\hat{x})))$.
Then, we collect these samples into a set $\H$. As shown in Algorithm~\ref{alg:alg}, $\H$ is then merged with the existing samples to create a new data set for GP. The algorithm returns the control module $\cm$ if it successfully finds one, otherwise, it returns some debug information to the user such that the perception function can be improved accordingly in a separate procedure. The debug information is simply the set $\H$ in the last iteration.

\begin{algorithm}
\KwIn{Max number of iteration: $I$; Confidence $\delta$.}
\KwOut{$\co$ and $\gp$, or $\H$.}
\SetKwRepeat{Do}{do}{while}
Randomly initialize $\D = \{(\hat{x}_1, x_1), \cdots, (\hat{x}_N, x_N)\}$\;
$i \leftarrow 0$\;
\Do{$\H \neq \varnothing$ and $i < I$}{
    Compute the state estimator $\gp$ on $\D$\;
    Construct $\D_c$ and train $h$ and $\co$ on $\D_c$\;
    Collect hard samples $\H$\;
    $\D \leftarrow \D \cup \H$; $i \leftarrow i + 1$\;
}
\caption{Adaptive sampling.}
\label{alg:alg}
\end{algorithm}

By jointly improving the state estimator and the controller, we avoid learning an unnecessarily accurate state estimator at points where a rough state estimate already suffices. For example, at points that are far away from the boundary of the safe set, safe control input can be synthesized even if the state estimate is not very accurate. While at points close to the boundary, an accurate estimate becomes necessary for synthesizing a safe control input.

\label{sec:rtas}

\section{Experiments}
We evaluated the proposed approach on three dynamical systems, two simple systems with synthetic perception error functions, and one realistic simulation environment in CARLA with deep-learning-based perception modules. We begin by discussing the details of the benchmarks.

\subsection{Benchmarks and implementation details}
\noindent\textbf{Dubins vehicle.} 
The Dubins vehicle model~\cite{ dubins1957curves,sun2020learning} 
has four state variables  $p_x$, $p_y$, $\theta$, and $v$ representing the 2D position, the heading angle, and the speed. The target invariant set $\S$ 
is defined to keep the position of the vehicle inside the area enclosed with red lines as shown in Fig.~\ref{fig:density} (left). The state variables $p_x$, $p_y$, and $v$ are all perfectly observable (i.e., the perceived values equal to the actual values), but the heading angle $\theta$ is not observable. The controller only has access to a perceived heading angle $\hat{\theta} = \theta + \sin(p_x + p_y)$. Please note that such a perception function is just designed to preliminarily verify the proposed framework. As will be shown later, the proposed approach can handle much more complicated perception functions.

\noindent\textbf{Cart-pole.}
The standard cart-pole model~\cite{sun2020learning, barto1983neuronlike}
has four state variables  $p$, $v$, $\theta$, and $\omega$, which represent the position and the velocity of the cart, and the angle and angular velocity of the pole. The target invariant set is defined as $|p| < 3$ and $|\theta| < \pi / 6$. Here $p$ and $\theta$ are  perfectly observable, but  $v$ and $\omega$ are not. The controller only has access to the perceived values $\hat{v} = v + \sin(2p + 4\theta)$ and $\hat{\omega} = \omega + \cos(2p + 4\theta)$.

\noindent\textbf{Lane-keeping.}
We adopt the default vehicle model and map in the CARLA simulator~\cite{dosovitskiy2017carla} as shown in Fig.~\ref{fig:density} (right).
The key state variables of the vehicle are the lateral position $p$ and the heading angle $\theta$. The values $p=0$ define the center of the lane and $p = \pm 3.5$ define the boundary of the lane. 
The target invariant set is for the lateral position of the car to stay within the lane boundaries, i.e., $|p| < 3.5$.
The velocity of the vehicle is set to a constant.
There is a front-facing camera attached to the vehicle, and the car's controller uses the lane detection approach of~\cite{wu2021yolop} to detect the lane boundaries from the image (See  Fig.~\ref{fig:density}). Then, the lateral position and the heading angle are computed based on the detected lanes using the geometry discussed in~\cite{hsieh2021verifying}.

\vspace{1mm}
\noindent\textbf{Implementation details.}
We adopt GPytorch~\cite{gardner2018gpytorch} as the programming framework for GP. The confidence level is set to $\delta = 0.95$. For all experiments in this section, we model the controller and the barrier function with two $3$-layer neural networks, of which the hidden layer contains $128$ neurons. The $K_\infty$ function $\alpha$ is designed as $\alpha(x) = 0.1 x$. The loss weights in Eq.\eqref{eq:ERM} are $\lambda_1 = 0.01$ and $\lambda_2 = 1$. To construct the data set $\D_c$ for learning the controller and barrier function, we set $M_1 = 10000$ and $M_2 = 32$. We train the neural networks for $30$ epochs with a learning rate $0.1$. The maximum number of iterations is set to $I = 6$.

\subsection{Experimental results}
\label{ssection:results}
As a performance index for the learned controller, we report the \textit{unsafe ratio}, which empirically measures the fraction of finite-time ($10$ seconds in our experiments) trajectories that exit the invariant set $\S$ from a critical set of initial conditions. The choice of the initial condition is important because the unsafe ratio will be favorable for an initial set that is far from the  boundary of $\S$.
In our experiments, the critical initial set is chosen such that the trajectories exit $\S$ in $1$ second if the control input $u$ is set to zero.

In Fig.~\ref{fig:sample_efficency}, we show how the unsafe ratio varies with the number of samples in the data set $\D$. We compared the proposed adaptive sampling approach with uniform sampling, where instead of enlarging the data set $\D$ with hard samples we use data points that are uniformly sampled from the state space, and the baseline, where GP is disabled and the state estimator is set to $\gp(\hat{x}) = \{\hat{x}\}$. There are several observations from the experiments.
(1) Both methods find controllers that are robust to the perception error.
(2) Adaptive sampling leads to a lower unsafe ratio than uniform sampling with the same number of samples in $\D$, which empirically confirms our intuition  that the proposed controller design approach with adaptive sampling is more sample efficient.
To further illustrate the benefit of adaptive sampling, we contrast the data sets $\D$ generated by adaptive and uniform sampling in Fig.~\ref{fig:density} (left). 
As can be seen from the figure, adaptive sampling zooms in on relevant areas of the state space, because at those areas, a more accurate state estimator is needed, while uniform sampling evenly distributes its sampling budget.
Furthermore, a video demonstration of the results on the lane-keeping benchmark can be found in the supplementary material.

\begin{figure}
    \centering
    \includegraphics[width=\linewidth]{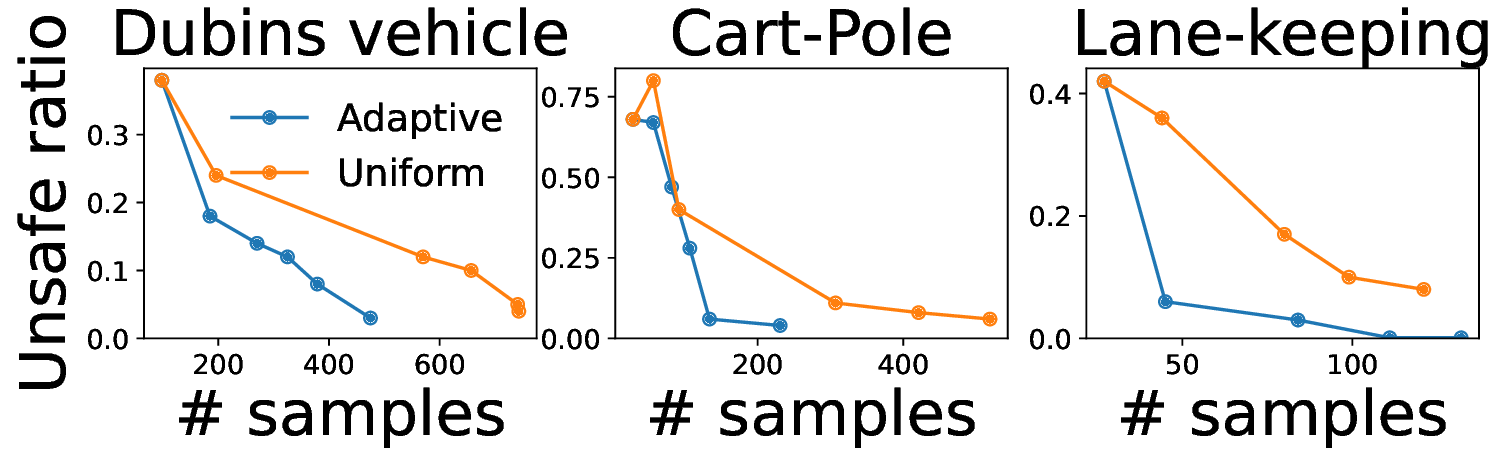}
    \caption{\small Comparison of sample efficiency of learning certifiable controller. While both uniform and adaptive sampling approaches learn safety preserving controllers, the latter is significantly more sample efficient. The baseline for the three benchmarks are $0.58$, $0.85$, and $0.61$ respectively.}
    \label{fig:sample_efficency}
    \vspace{-3mm}
\end{figure}

\begin{figure}
    \centering
    \includegraphics[width=.28\linewidth]{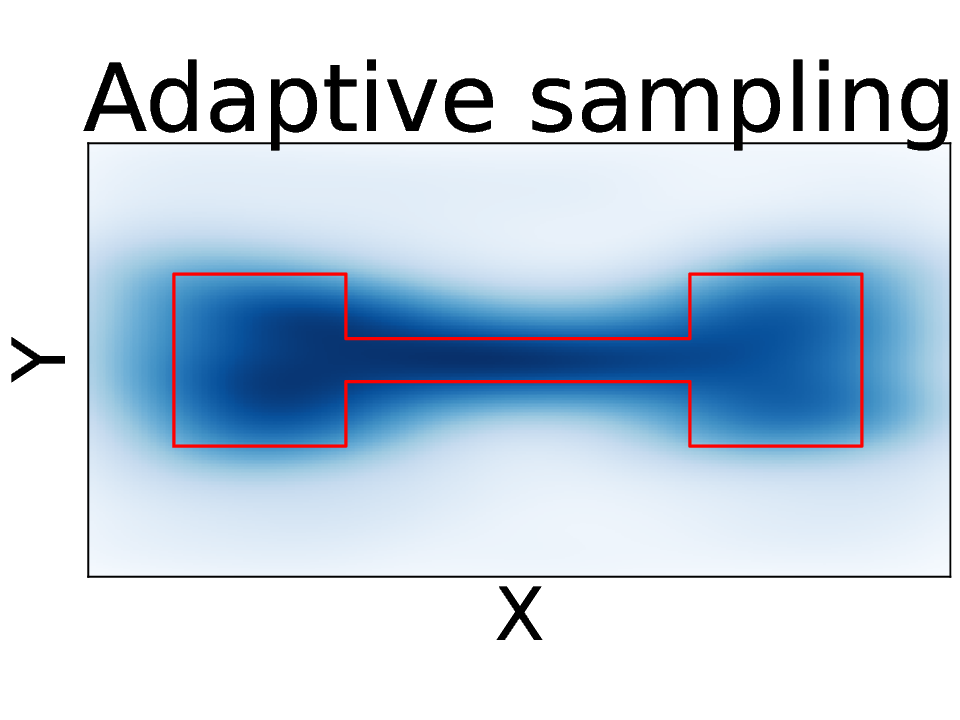}
    \includegraphics[width=.28\linewidth]{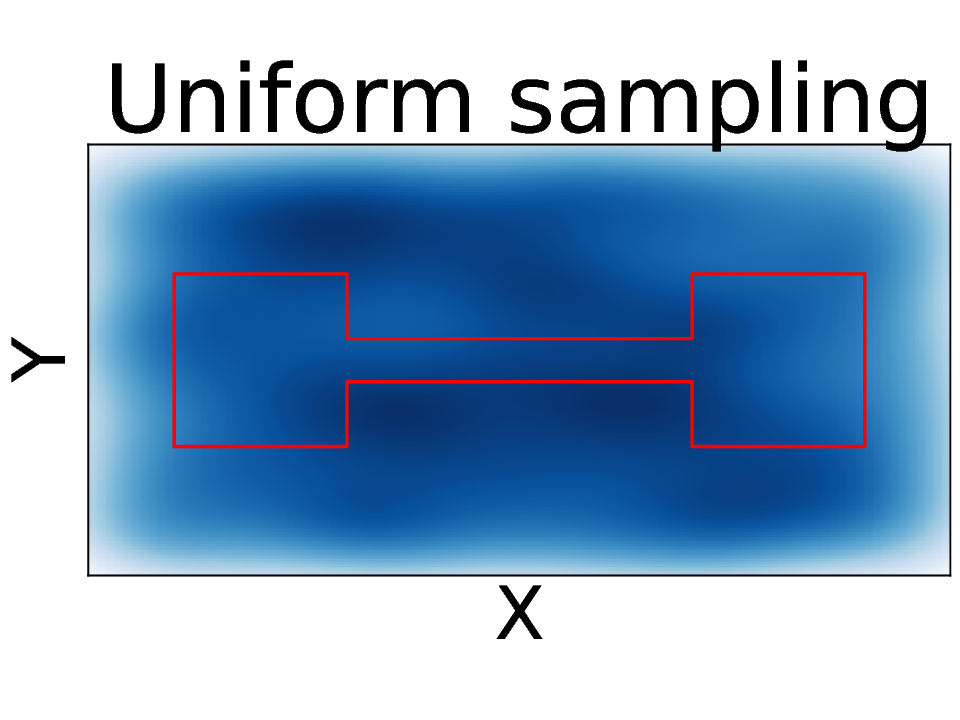}
    \includegraphics[width=.2\linewidth, trim={30mm 10mm 30mm 10mm}, clip]{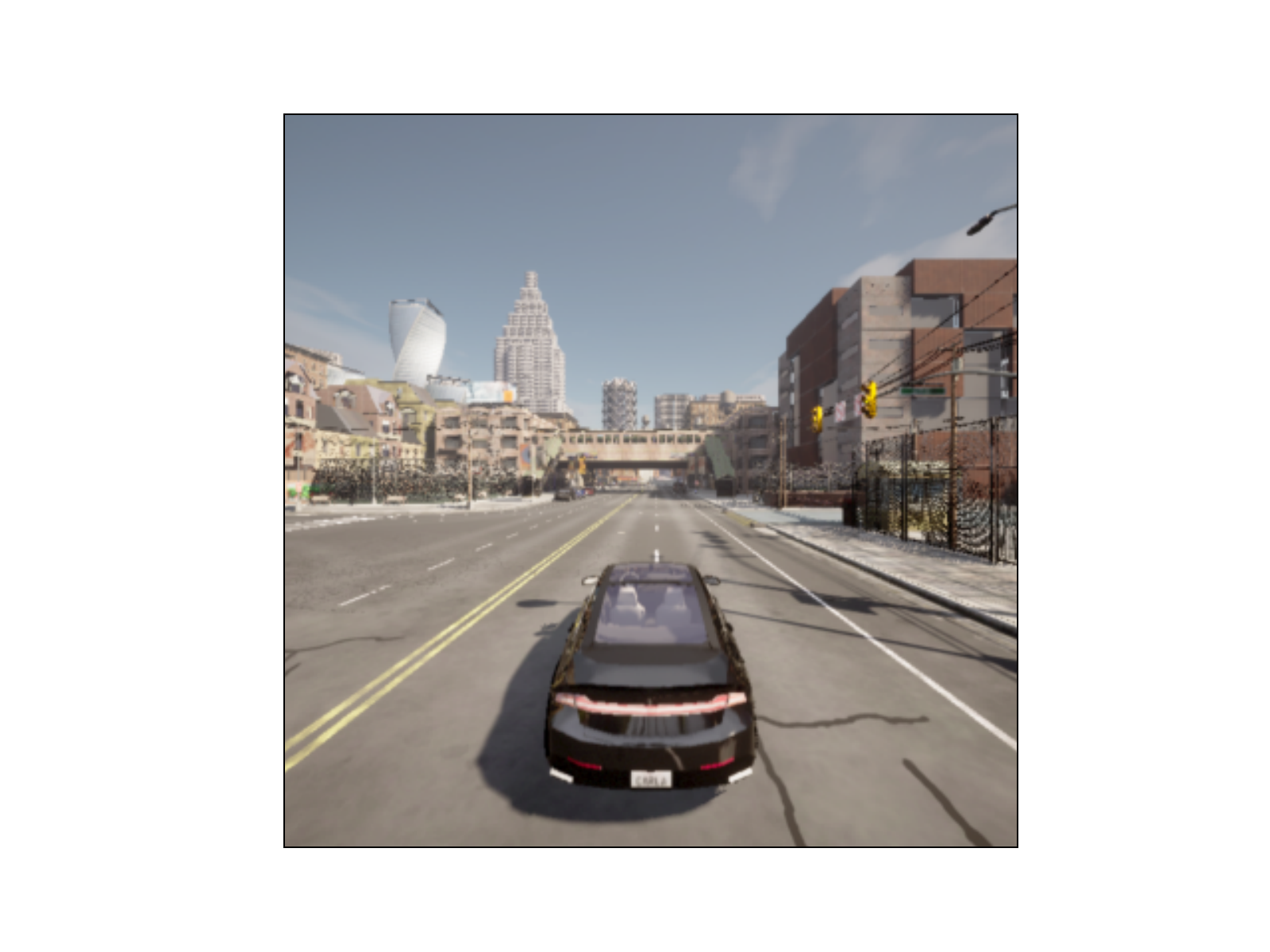}
    \includegraphics[width=.2\linewidth, trim={30mm 10mm 30mm 10mm}, clip]{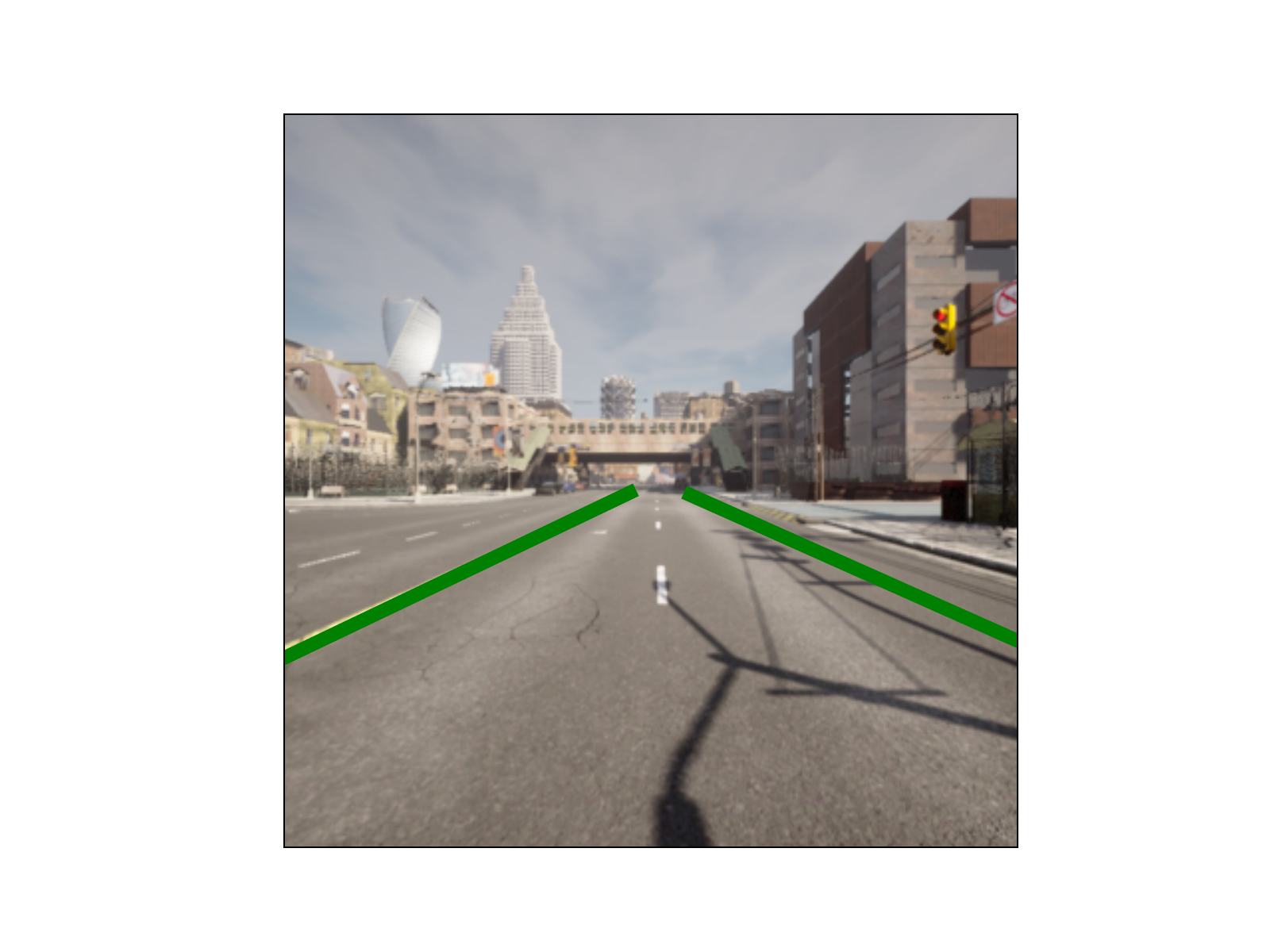}
    \caption{\small Distribution of samples in the data set $\D$ constructed by two sampling approaches on Dubins vehicle (left). Darker color means higher density. The simulated lane-keeping scenario in CARLA and the detected lanes (right).}
    \label{fig:density}
\vspace{-5mm}
\end{figure}

\section{Limitations and future work}
One should be careful with interpreting the theoretical guarantee of the proposed approach.
The confidence $\delta$ characterizes the probability of each standalone state falling into the high-confidence set. Further theoretical treatment is required to boost such a guarantee to one on the safety of trajectories.
Furthermore, in order to handle larger data sets, approximate GP such as~\cite{hensman2015scalable} has to be used.

\bibliographystyle{IEEEtran}
\bibliography{IEEEabrv, references}

\begin{thebibliography}{10}
\providecommand{\url}[1]{#1}
\csname url@rmstyle\endcsname
\providecommand{\newblock}{\relax}
\providecommand{\bibinfo}[2]{#2}
\providecommand\BIBentrySTDinterwordspacing{\spaceskip=0pt\relax}
\providecommand\BIBentryALTinterwordstretchfactor{4}
\providecommand\BIBentryALTinterwordspacing{\spaceskip=\fontdimen2\font plus
\BIBentryALTinterwordstretchfactor\fontdimen3\font minus
  \fontdimen4\font\relax}
\providecommand\BIBforeignlanguage[2]{{%
\expandafter\ifx\csname l@#1\endcsname\relax
\typeout{** WARNING: IEEEtran.bst: No hyphenation pattern has been}%
\typeout{** loaded for the language `#1'. Using the pattern for}%
\typeout{** the default language instead.}%
\else
\language=\csname l@#1\endcsname
\fi
#2}}

\bibitem{DBLP:journals/corr/SzegedyZSBEGF13}
\BIBentryALTinterwordspacing
C.~Szegedy, W.~Zaremba, I.~Sutskever, J.~Bruna, D.~Erhan, I.~J. Goodfellow, and
  R.~Fergus, ``Intriguing properties of neural networks,'' in \emph{2nd
  International Conference on Learning Representations, {ICLR} 2014, Banff, AB,
  Canada, April 14-16, 2014, Conference Track Proceedings}, Y.~Bengio and
  Y.~LeCun, Eds., 2014. [Online]. Available:
  \url{http://arxiv.org/abs/1312.6199}
\BIBentrySTDinterwordspacing

\bibitem{gal2016dropout}
Y.~Gal and Z.~Ghahramani, ``Dropout as a bayesian approximation: Representing
  model uncertainty in deep learning,'' in \emph{international conference on
  machine learning}.\hskip 1em plus 0.5em minus 0.4em\relax PMLR, 2016, pp.
  1050--1059.

\bibitem{wilson2019predictive}
B.~Wilson, J.~Hoffman, and J.~Morgenstern, ``Predictive inequity in object
  detection,'' \emph{arXiv preprint arXiv:1902.11097}, 2019.

\bibitem{ames2019control}
A.~D. Ames, S.~Coogan, M.~Egerstedt, G.~Notomista, K.~Sreenath, and P.~Tabuada,
  ``Control barrier functions: Theory and applications,'' in \emph{2019 18th
  European Control Conference (ECC)}.\hskip 1em plus 0.5em minus 0.4em\relax
  IEEE, 2019, pp. 3420--3431.

\bibitem{fan2020fast}
C.~Fan, K.~Miller, and S.~Mitra, ``Fast and guaranteed safe controller
  synthesis for nonlinear vehicle models,'' in \emph{International Conference
  on Computer Aided Verification}.\hskip 1em plus 0.5em minus 0.4em\relax
  Springer, 2020, pp. 629--652.

\bibitem{fan2018controller}
C.~Fan, U.~Mathur, S.~Mitra, and M.~Viswanathan, ``Controller synthesis made
  real: reach-avoid specifications and linear dynamics,'' in
  \emph{International Conference on Computer Aided Verification}.\hskip 1em
  plus 0.5em minus 0.4em\relax Springer, 2018, pp. 347--366.

\bibitem{sun2020learning}
D.~Sun, S.~Jha, and C.~Fan, ``{Learning Certified Control using Contraction
  Metric},'' in \emph{Conference on Robot Learning}, 2020.

\bibitem{Vasudevan-RSS-19}
S.~Vaskov, S.~Kousik, H.~Larson, F.~Bu, J.~R. Ward, S.~Worrall,
  M.~Johnson-Roberson, and R.~Vasudevan, ``Towards provably not-at-fault
  control of autonomous robots in arbitrary dynamic environments,'' in
  \emph{Proceedings of Robotics: Science and Systems}, FreiburgimBreisgau,
  Germany, June 2019.

\bibitem{herbert2017fastrack}
S.~L. Herbert, M.~Chen, S.~Han, S.~Bansal, J.~F. Fisac, and C.~J. Tomlin,
  ``{FaSTrack}: {A} modular framework for fast and guaranteed safe motion
  planning,'' in \emph{56th Annual Conference on Decision and Control
  (CDC)}.\hskip 1em plus 0.5em minus 0.4em\relax IEEE, 2017.

\bibitem{dean2020guaranteeing}
S.~Dean, A.~J. Taylor, R.~K. Cosner, B.~Recht, and A.~D. Ames, ``Guaranteeing
  safety of learned perception modules via measurement-robust control barrier
  functions,'' \emph{arXiv preprint arXiv:2010.16001}, 2020.

\bibitem{lin2018autonomous}
Y.~Lin, F.~Gao, T.~Qin, W.~Gao, T.~Liu, W.~Wu, Z.~Yang, and S.~Shen,
  ``Autonomous aerial navigation using monocular visual-inertial fusion,''
  \emph{Journal of Field Robotics}, vol.~35, no.~1, pp. 23--51, 2018.

\bibitem{loianno2016estimation}
G.~Loianno, C.~Brunner, G.~McGrath, and V.~Kumar, ``Estimation, control, and
  planning for aggressive flight with a small quadrotor with a single camera
  and imu,'' \emph{IEEE Robotics and Automation Letters}, vol.~2, no.~2, pp.
  404--411, 2016.

\bibitem{tang2018aggressive}
S.~Tang, V.~W{\"u}est, and V.~Kumar, ``Aggressive flight with suspended
  payloads using vision-based control,'' \emph{IEEE Robotics and Automation
  Letters}, vol.~3, no.~2, pp. 1152--1159, 2018.

\bibitem{codevilla2018end}
F.~Codevilla, M.~M{\"u}ller, A.~L{\'o}pez, V.~Koltun, and A.~Dosovitskiy,
  ``End-to-end driving via conditional imitation learning,'' in \emph{2018 IEEE
  international conference on robotics and automation (ICRA)}.\hskip 1em plus
  0.5em minus 0.4em\relax IEEE, 2018, pp. 4693--4700.

\bibitem{Sadeghi-RSS-17}
F.~Sadeghi and S.~Levine, ``Cad2rl: Real single-image flight without a single
  real image,'' in \emph{Proceedings of Robotics: Science and Systems},
  Cambridge, Massachusetts, July 2017.

\bibitem{dean2020robust}
S.~Dean, N.~Matni, B.~Recht, and V.~Ye, ``Robust guarantees for
  perception-based control,'' in \emph{Learning for Dynamics and
  Control}.\hskip 1em plus 0.5em minus 0.4em\relax PMLR, 2020, pp. 350--360.

\bibitem{dreossi2019verifai}
T.~Dreossi, D.~J. Fremont, S.~Ghosh, E.~Kim, H.~Ravanbakhsh,
  M.~Vazquez-Chanlatte, and S.~A. Seshia, ``Verifai: A toolkit for the design
  and analysis of artificial intelligence-based systems,'' \emph{arXiv preprint
  arXiv:1902.04245}, 2019.

\bibitem{hsieh2021verifying}
C.~Hsieh, K.~Joshi, S.~Misailovic, and S.~Mitra, ``Verifying controllers with
  convolutional neural network-based perception: a case for intelligible, safe,
  and precise abstractions,'' \emph{arXiv preprint arXiv:2111.05534}, 2021.

\bibitem{everett2021efficient}
M.~Everett, G.~Habibi, and J.~P. How, ``Efficient reachability analysis of
  closed-loop systems with neural network controllers,'' in \emph{2021 IEEE
  International Conference on Robotics and Automation (ICRA)}.\hskip 1em plus
  0.5em minus 0.4em\relax IEEE, 2021, pp. 4384--4390.

\bibitem{sun2022neureach}
D.~Sun and S.~Mitra, ``Neureach: Learning reachability functions from
  simulations,'' in \emph{International Conference on Tools and Algorithms for
  the Construction and Analysis of Systems}.\hskip 1em plus 0.5em minus
  0.4em\relax Springer, 2022, pp. 322--337.

\bibitem{chang2019neural}
Y.-C. Chang, N.~Roohi, and S.~Gao, ``Neural lyapunov control,'' in
  \emph{Advances in Neural Information Processing Systems}, 2019, pp.
  3245--3254.

\bibitem{qin2021learning}
Z.~Qin, K.~Zhang, Y.~Chen, J.~Chen, and C.~Fan, ``Learning safe multi-agent
  control with decentralized neural barrier certificates,'' \emph{arXiv
  preprint arXiv:2101.05436}, 2021.

\bibitem{kersting2007most}
K.~Kersting, C.~Plagemann, P.~Pfaff, and W.~Burgard, ``Most likely
  heteroscedastic gaussian process regression,'' in \emph{Proceedings of the
  24th international conference on Machine learning}, 2007, pp. 393--400.

\bibitem{rasmussen2003gaussian}
C.~E. Rasmussen, ``Gaussian processes in machine learning,'' in \emph{Summer
  school on machine learning}.\hskip 1em plus 0.5em minus 0.4em\relax Springer,
  2003, pp. 63--71.

\bibitem{goldberg1997regression}
P.~Goldberg, C.~Williams, and C.~Bishop, ``Regression with input-dependent
  noise: A gaussian process treatment,'' \emph{Advances in neural information
  processing systems}, vol.~10, 1997.

\bibitem{prajna2004safety}
S.~Prajna and A.~Jadbabaie, ``Safety verification of hybrid systems using
  barrier certificates,'' in \emph{International Conference on Hybrid Systems:
  Computation and Control (HSCC)}.\hskip 1em plus 0.5em minus 0.4em\relax
  Springer, 2004, pp. 477--492.

\bibitem{MWLF-hscc03}
S.~Mitra, Y.~Wang, N.~Lynch, and E.~Feron, ``Safety verification of model
  helicopter controller using hybrid {I}nput/{O}utput automata,'' in
  \emph{International Conference on Hybrid Systems: Computation and Control
  (HSCC)}, 2003, pp. 343--358.

\bibitem{dubins1957curves}
L.~E. Dubins, ``On curves of minimal length with a constraint on average
  curvature, and with prescribed initial and terminal positions and tangents,''
  \emph{American Journal of mathematics}, vol.~79, no.~3, pp. 497--516, 1957.

\bibitem{barto1983neuronlike}
A.~G. Barto, R.~S. Sutton, and C.~W. Anderson, ``Neuronlike adaptive elements
  that can solve difficult learning control problems,'' \emph{IEEE transactions
  on systems, man, and cybernetics}, no.~5, pp. 834--846, 1983.

\bibitem{dosovitskiy2017carla}
A.~Dosovitskiy, G.~Ros, F.~Codevilla, A.~Lopez, and V.~Koltun, ``Carla: An open
  urban driving simulator,'' in \emph{Conference on robot learning}.\hskip 1em
  plus 0.5em minus 0.4em\relax PMLR, 2017, pp. 1--16.

\bibitem{wu2021yolop}
D.~Wu, M.~Liao, W.~Zhang, X.~Wang, X.~Bai, W.~Cheng, and W.~Liu, ``Yolop: You
  only look once for panoptic driving perception,'' 2021.

\bibitem{gardner2018gpytorch}
J.~R. Gardner, G.~Pleiss, D.~Bindel, K.~Q. Weinberger, and A.~G. Wilson,
  ``Gpytorch: Blackbox matrix-matrix gaussian process inference with gpu
  acceleration,'' in \emph{Advances in Neural Information Processing Systems},
  2018.

\bibitem{hensman2015scalable}
J.~Hensman, A.~Matthews, and Z.~Ghahramani, ``Scalable variational gaussian
  process classification,'' in \emph{Artificial Intelligence and
  Statistics}.\hskip 1em plus 0.5em minus 0.4em\relax PMLR, 2015, pp. 351--360.

\end{thebibliography}

\end{document}